\newcommand{\yd}{y_{\mathrm{direct}}}
\newcommand{\mcal}{\mathcal}
\newcommand{\norm}[1]{\left\lVert#1\right\rVert}
\newcommand{\E}[1]{\mathbb{E}\left[#1\right]}
\renewcommand{\P}[1]{p \left[#1\right]}
\newtheorem{lemma}{Lemma}
\newtheorem{theorem}{Theorem}
\newtheorem{observation}{Observation}
\renewcommand{\P}{{\mathcal{P}}}
\newcommand{\N}{{\mathcal{N}}}
\newcommand{\D}{{\mathcal{D}}}
\newcommand{\X}{{\mathcal{X}}}
\newcommand{\Y}{{\mathcal{Y}}}
\newcommand{\be}{\begin{equation}}
\newcommand{\ee}{\end{equation}}
\definecolor{Gray}{gray}{0.85}
\definecolor{LightCyan}{rgb}{0.88,1,1}
\newcolumntype{a}{>{\columncolor{Gray}}c}
\newcolumntype{b}{>{\columncolor{white}}c}
\DeclareMathOperator*{\argmax}{arg\,max}
\def\@onedot{\ifx\@let@token.\else.\null\fi\xspace}
\DeclareRobustCommand\onedot{\futurelet\@let@token\@onedot}
\newcommand{\figref}[1]{Fig\onedot~\ref{#1}}
\newcommand{\equref}[1]{Eq\onedot~\eqref{#1}}
\newcommand{\secref}[1]{Sec\onedot~\ref{#1}}
\newcommand{\tabref}[1]{Tab\onedot~\ref{#1}}
\newcommand{\thmref}[1]{Theorem~\ref{#1}}
\newcommand{\lemref}[1]{Lemma~\ref{#1}}
\def\eg{\emph{e.g}\onedot} 
\def\ie{\emph{i.e}\onedot} 
\def\etc{\emph{etc}\onedot}
\icmltitlerunning{Training Deep Neural Networks via Direct Loss Minimization}
\begin{document} 

\twocolumn[
%\icmltitle{Direct Loss Minimization for Training Deep Neural Networks}
\icmltitle{Training Deep Neural Networks via Direct Loss Minimization}

% It is OKAY to include author information, even for blind
% submissions: the style file will automatically remove it for you
% unless you've provided the [accepted] option to the icml2016
% package.
\icmlauthor{Yang Song}{songyang12@mails.tsinghua.edu.cn}
\icmladdress{Dept. of Physics, Tsinghua University,
            Beijing 100084, China}
\icmlauthor{Alexander G. Schwing}{aschwing@cs.toronto.edu}
\icmlauthor{Richard S. Zemel}{zemel@cs.toronto.edu}
\icmlauthor{Raquel Urtasun}{urtasun@cs.toronto.edu}
\icmladdress{Dept. of Computer Science, University of Toronto,
             Toronto, Ontario M5S 2E4, Canada}

% You may provide any keywords that you 
% find helpful for describing your paper; these are used to populate 
% the "keywords" metadata in the PDF but will not be shown in the document
\icmlkeywords{structured prediction, neural networks, direct loss minimization}

\vskip 0.3in

]

\begin{abstract}

Supervised training of deep neural nets typically relies on minimizing
cross-entropy. However, in many domains, we are interested in performing well
on metrics specific to the application.  In this paper we propose a direct loss
minimization approach to train deep neural networks, which provably minimizes
the application-specific loss function. This is often non-trivial, since these
functions are neither smooth nor decomposable and thus are not amenable
to optimization with standard gradient-based methods. We demonstrate the
effectiveness of our approach in the context of maximizing average precision
for ranking problems. Towards this 
goal, we develop a novel
dynamic programming algorithm that can efficiently compute
the weight updates. Our approach proves superior to a variety of baselines in
the context of action classification and object detection, especially in the
presence of label noise.

%Our approach proves superior to a variety of baselines in the context of action classification and object detection, especially when the gap between the surrogate loss and the task-loss loss is not negligible.
\end{abstract}

\section{Introduction}

Standard supervised neural network training
involves computing the gradient of the loss function with respect to the parameters of the model, and therefore
requires  the loss function to be differentiable.
Many interesting loss functions are, however, {\it non-differentiable} with
respect to the output of the network.
Notable examples are functions
based on discrete outputs, as is common in labeling and ranking problems.
In many cases these losses are also {\it non-decomposable},
in that they cannot be expressed as simple sums over the output
units of the network.

In the context of structured prediction problems, in which the
output is multi-dimensional, researchers
have developed max-margin training methods that are capable of minimizing an
upper bound on non-decomposable loss functions. 
Standard learning in this paradigm involves changing the parameters 
such that the model assigns a higher score to the groundtruth output
than to any other output. This
is typically encoded by a  constraint, enforcing that the groundtruth score
should be higher than that of a selected, contrastive output. The latter is
defined as the result of inference performed using a modified score function
which combines the model score and the task loss, representing the metric
that we care about for the application. 
This modified scoring function encodes the fact that we  should penalize higher scoring configurations that are inferior in terms of the task
loss. Various efficient methods have been proposed for incorporating 
complex discrete loss functions into this max-margin approach
\citep{yue2007support,VolkovsICML2009,TarlowAISTATS2012,mohapatra2014efficient}. 
Importantly, however, this form of learning does not directly optimize the
task loss, but rather an upper bound.  

An alternative approach, frequently used in deep neural networks, is to train
with a surrogate loss that can be  easily optimized, \eg, cross-entropy
\citep{Lecun05lossfunctions,Bengio-et-al-2015-Book}. The problem of this
procedure is that for many application domains the task loss differs
significantly from the surrogate loss.
%, particularly when the data is noisy. 

The seminal work of \citet{hazan2010direct} showed how to compute the gradient of complex non-differentiable loss functions when dealing with  linear models. 
In this paper we extend their theorem to   the non-linear case. % proving that the theoretical results hold in this more general setting. 
This is important in practice as it provides us with a new learning algorithm
to train deep neural networks end-to-end to minimize the application specific
loss function.  
As shown in our experiments on action classification and object detection,
this is very beneficial, particularly when dealing with noisy labels.  
%{\color{blue} This generalization addresses the open problem of finding an appropriate regularizer for linear direct loss minimization, since our method can be applied to any non-linear function, \eg, a deep network. %Our result is a simple learning algorithm that can be applied to non-linear, deep networks.
%}
%{\color{red} To obtain a more robust training procedure, in this paper we propose to train deep neural nets to directly optimize the task loss, for non-decomposable loss functions.}

%We demonstrate the effectiveness of our approach in the context of optimizing average precision (AP) in ranking tasks. This application is challenging as AP is neither decomposable nor smooth. Our experiments investigate the effectiveness of this algorithm, and the benefits of optimizing the task loss, in terms of action classification and object detection. The results show that direct loss minimization is superior to a variety of methods commonly used for minimizing non-decomposable and non-smooth loss functions, especially when the surrogate losses are not accurate, \eg, when datasets are noisy.

\section{Direct Loss Minimization for Neural Networks}
In this section we present a novel formulation for learning neural networks by
minimizing the task loss. Towards this goal, our first main result is a
theorem extending the direct loss minimization framework of \citep{hazan2010direct} to non-linear models. %, typically
%represented by deep neural networks. 
%This generalization not only improves the
%expressive power of models that can be optimized directly, %by direct loss 
%but also solves the open problem of finding regularization methods raised in
%\citep{hazan2010direct}, since the scoring functions are no longer restricted
%to be linear \wrt the parameters. 
%{\color{red} what is the regularization issue}

A neural network can be viewed as defining a 
 composite scoring function $F(x,y,w)$, which depends on the input data
 $x\in\X$, some parameters $w\in\mathbb{R}^A$, and the output  $y\in\Y$. 
% In the case of multi-class classification,  the output refers to one of $|\Y|$ classes, \ie, $y\in \{1, \ldots, |\Y|\}$. 
 Inference is then performed by
 picking the output with maximal score, \ie: 
 
$$
y_w = \arg\max_{\hat y\in\Y} F(x,\hat y,w).
$$

Given a dataset of input-output pairs $\D = \{(x,y)\}$, a standard machine learning approach  is
to optimize the  parameters $w$ of the scoring function $F$ by optimizing cross-entropy. This is equivalent to maximizing the likelihood
of the data, where the probability over each output configuration is given by
the output of a softmax function, attached to the last layer of the network.  

However, in many practical applications, we want prediction to succeed in an application-specific metric. This metric is typically referred to  as the
{\it task loss},  $L(y,y_w) \geq 0$, which measures the compatibility between the annotated configuration $y$ and the prediction $y_w$.  
%In this paper we are thus interested in minimizing the task loss
For learning, in this paper, we are thus interested in minimizing the task loss
\be
w^\ast = \arg\min_w \mathbb{E}\left[L(y,y_w)\right],
\label{eq:DLMProgram}
\ee
where $\E{\cdot}$ denotes an expectation taken over the underlying distribution behind the given dataset.

Solving this program is non-trivial, as many loss functions of interest are non-decomposable
and non-smooth, and thus are not amenable to gradient-based methods. 
Examples of such metrics include average precision (AP) from
information retrieval, intersection-over-union which is used in image
labeling, and normalized discounted cumulative gain (NDCG) which
is popular in ranking.  In general many metrics are discrete, are not
simple sums over the network outputs, and are not readily differentiable.

During training of a classifier it is hence common to employ a surrogate error metric, such as cross-entropy
or hinge-loss,
where one can directly compute the gradients with respect to the parameters. 
%However, results are reported using the 
%more appropriate measures. 
In the context of structured prediction models, several approaches have been
developed to effectively optimize the structured hinge loss with
non-decomposable task losses.
%, \eg, work by  \citet{TarlowAISTATS2012,yue2007support,mohapatra2014efficient}. 
While these methods include the task loss in the objective, they are not
directly minimizing it, and hence these surrogate losses are at best
highly correlated with the desired metric. Finding efficient techniques to
directly minimize the metric of choice is therefore desirable. 

\citet{hazan2010direct}  showed  that it is  possible to asymptotically optimize the task-loss when the function $F$ is linear in the parameters \ie,  $F(x,y,w) = w^\top\phi(x,y)$. This work has produced encouraging results. For example, \citet{hazan2010direct} used this technique for phoneme-to-speech alignment on the TIMIT dataset optimizing the \textit{$\tau$-alignment loss} and the \textit{$\tau$-insensitive loss}, while \citet{KeshetInterspeech2011} illustrated applicability of the method to hidden Markov models for speech. Direct loss minimization was also shown to work well for inverse optimal control by \citet{doerr2015direct}.

The first contribution of our work is to generalize this theorem to arbitrary scoring functions, \ie, non-linear and non-convex functions. This allows us to derive a new training algorithm for deep neural networks which directly minimizes the task loss. 

\begin{theorem}[General Loss Gradient Theorem]
\label{thm:GLGT}
When given a finite set   $\mcal{Y}$,  a scoring function
 $F(x,y,w)$, a data distribution,
 as well as a task-loss $L(y,\hat y)$, 
then, under some mild regularity conditions (see the supplementary material for
details), the direct loss gradient has the following
 form:
 \begin{equation}
 \begin{split}
 &\nabla_w \E{L(y,y_w)} \\
 = &\pm \lim_{\epsilon \rightarrow 0} \frac{1}{\epsilon} \E{\nabla_w F(x,\yd,w) - \nabla_w F(x,y_w,w)}\label{eq:DirectLossGradient},
 \end{split}
 \end{equation} 
 with
 \begin{eqnarray}
 y_w &=& \argmax_{\hat y \in \Y} F(x,\hat y,w),\nonumber\\
 \yd &=& \argmax_{\hat y \in \Y} F(x,\hat y,w) \pm \epsilon L(y,\hat y) \label{eq:LossAugInf}.
 \end{eqnarray}
\end{theorem}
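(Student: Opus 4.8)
The plan is to route the whole computation through the loss-augmented value function and to reduce the claimed identity to an equality of mixed second partial derivatives. For the upper ($+$) branch, I would define the smoothed value function
$$V(w,\epsilon) = \mathbb{E}\left[\max_{\hat y\in\Y}\left(F(x,\hat y,w) + \epsilon L(y,\hat y)\right)\right],$$
so that $V(w,0) = \mathbb{E}\left[F(x,y_w,w)\right]$. The two argmaxes in the statement are exactly the inner maximizers of $V(w,\epsilon)$: for $\epsilon>0$ it is $\yd$, and for $\epsilon=0$ it is $y_w$. The strategy is to compute both $\nabla_w V$ and $\partial_\epsilon V$ by the envelope theorem, and then to observe that the right-hand side of the theorem is nothing but the mixed partial $\partial_\epsilon\nabla_w V$ at $\epsilon=0$, whereas the left-hand side is $\nabla_w\partial_\epsilon V$ at $\epsilon=0$.

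First I would apply Danskin's (envelope) theorem to the inner maximization, pointwise in the data before integrating. Since $\Y$ is finite and, under the regularity conditions, the maximizer is almost surely unique (ties form a measure-zero set under the data distribution), differentiating the pointwise max in $w$ picks out the gradient at the maximizer, giving $\nabla_w V(w,\epsilon) = \mathbb{E}\left[\nabla_w F(x,\yd,w)\right]$ and in particular $\nabla_w V(w,0) = \mathbb{E}\left[\nabla_w F(x,y_w,w)\right]$. Differentiating instead in $\epsilon$ yields $\partial_\epsilon V(w,\epsilon) = \mathbb{E}\left[L(y,\yd)\right]$, and at $\epsilon=0$ this is $\partial_\epsilon V(w,0) = \mathbb{E}\left[L(y,y_w)\right]$, precisely the objective whose gradient we seek. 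Passing $\nabla_w$ and $\partial_\epsilon$ through the expectation is the differentiation-under-the-integral-sign step, justified by dominated convergence under the regularity assumptions.

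With these two identities the theorem collapses to Clairaut/Schwarz. Writing the difference quotient,
$$\lim_{\epsilon\to 0}\frac{1}{\epsilon}\mathbb{E}\left[\nabla_w F(x,\yd,w) - \nabla_w F(x,y_w,w)\right] = \lim_{\epsilon\to 0}\frac{\nabla_w V(w,\epsilon) - \nabla_w V(w,0)}{\epsilon} = \partial_\epsilon\nabla_w V(w,0),$$
while the target gradient is $\nabla_w\mathbb{E}\left[L(y,y_w)\right] = \nabla_w\partial_\epsilon V(w,0)$. Equating $\partial_\epsilon\nabla_w V = \nabla_w\partial_\epsilon V$ is exactly the symmetry of mixed partials, valid when $V$ is $C^2$ jointly in $(w,\epsilon)$. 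The lower ($-$) branch is identical after replacing $+\epsilon L$ by $-\epsilon L$: this flips the sign of $\partial_\epsilon V(w,0)$ to $-\mathbb{E}\left[L(y,y_w)\right]$ and hence produces the $-$ in front of the limit, which is the source of the $\pm$ in the statement.

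I expect the main obstacle to lie entirely in justifying these interchanges rather than in the algebra, i.e., in verifying that the ``mild regularity conditions'' make $V(w,\epsilon)$ genuinely differentiable (not merely subdifferentiable) and smooth enough for Clairaut's theorem. The delicate points are: (i) almost-sure uniqueness of the argmax, so that the envelope theorem returns a single gradient rather than a subdifferential and so that $y_w$ and $\yd$ are well defined; (ii) continuity of the mixed second partials of $V$, where the finite-$\Y$ structure together with a density assumption on the data smooths the piecewise-constant map $w\mapsto L(y,y_w)$ into a differentiable expectation; and (iii) the domination bounds needed to differentiate under the expectation in both $w$ and $\epsilon$. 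Once these are in place, the identity is immediate from the envelope theorem and the symmetry of mixed partials.
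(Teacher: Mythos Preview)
The paper defers the formal proof to its supplementary material, so there is no in-text argument to match against line by line. That said, your route---introduce the loss-augmented value function $V(w,\epsilon)=\E{\max_{\hat y}(F(x,\hat y,w)+\epsilon L(y,\hat y))}$, use Danskin's theorem to identify $\nabla_w V(w,\epsilon)=\E{\nabla_w F(x,\yd,w)}$ and $\partial_\epsilon V(w,0)=\E{L(y,y_w)}$, and then reduce the claim to equality of mixed partials $\partial_\epsilon\nabla_w V=\nabla_w\partial_\epsilon V$ at $\epsilon=0$---is correct and is the standard mechanism behind direct-loss gradient identities; it is the natural nonlinear extension of the linear argument of \citet{hazan2010direct} and is almost certainly what the supplement does in substance.

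The one place I would tighten is the justification of the Clairaut step. Pointwise in $(x,y)$ the integrand $\max_{\hat y}(F+\epsilon L)$ is only Lipschitz, not $C^2$; all the smoothness of $V$ comes from averaging over the data distribution. So rather than asserting ``$V$ is $C^2$ jointly,'' which is slightly stronger than what follows automatically, you should argue that the regularity conditions (absolute continuity of the data law, almost-sure uniqueness of the argmax, and domination for differentiating under the integral) yield existence and \emph{continuity} of one of the mixed partials of $V$ in a neighborhood of $\epsilon=0$. That weak form of Schwarz's theorem is all you need, and it makes transparent exactly where each ``mild regularity condition'' is spent. Your own final paragraph already isolates these three ingredients correctly; just connect them explicitly to the hypothesis of the mixed-partials lemma rather than to a blanket $C^2$ claim.
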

\begin{proof}
%We refer the reader to the Appendix \secref{sec:app:glgtproof}, for a proof of the theorem. 
We refer the reader to the supplementary material for a formal proof of the theorem.
\end{proof}

\begin{figure*}
\centering
\fbox{
\begin{minipage}[c]{13.5cm}
{\bf Algorithm: Direct Loss Minimization for Deep Networks} 

Repeat until stopping criteria

\begin{enumerate}
\item Forward pass to compute $F(x,\hat y; w)$
\item Obtain $y_w$ and $\yd$ via inference and loss-augmented inference
\item Single backward pass via chain rule to obtain gradient $\nabla_w \E{L(y,y_w)}$
\item Update parameters using stepsize $\eta$: $w \leftarrow w - \eta\nabla_w \E{L(y,y_w)}$
\end{enumerate}

\end{minipage}
}
\caption{Our algorithm for direct loss minimization.}
\label{fig:GLGTalgo}
\end{figure*}

According to \thmref{thm:GLGT}, to obtain the gradient we need to find  the solution of two inference
problems. The first computes $y_w$, which is a standard inference
task, solved by  employing the forward propagation algorithm.  
The second inference problem is  prediction using a scoring function which is
perturbed by the task loss $L(y,\hat y)$. This is typically non-trivial
to solve, particularly when the task loss is not decomposable.
We borrow terminology from  the structured prediction literature,
where this perturbed inference problem is
commonly referred to as \emph{loss-augmented inference}~\citep{TsochantaridisJMLR2005,ChenICML2015}.  
In the following section we derive an efficient dynamic programming
algorithm to perform loss-augmented inference when the task loss is average precision.  

Note that loss-augmented inference, as specified in \equref{eq:LossAugInf},
can take the task loss into account in a positive or a negative way. Depending
on the sign, the gradient direction changes. \citet{hazan2010direct} provides
a nice intuition for the two different directions. The positive update
performs a step away from a worse configuration, while the negative direct
loss gradient encourages moves towards better outputs. This can be seen when
considering that maximization of the loss returns the worst output
configuration, while maximization of its negation returns the label with the
lowest loss. It is however an empirical question, whether the positive update
or the negative version performs better. \citet{hazan2010direct} reports
better results with   the negative update, while we find in our experiments
that the positive update is better when applied to deep neural networks. We
will provide intuition for this phenomenon in \secref{sec:exp}.

Note the relation between direct loss minimization and optimization of the structured hinge-loss. While we compute the gradient via the difference between the loss-augmented inference result and the prediction, structured hinge-loss requires computation of the difference between the loss-augmented inference solution and the ground truth. In addition, for any finite $\epsilon$, direct loss minimization is  sub-gradient descent of some corresponding ramp loss \citep{keshet2011generalization}.

In \figref{fig:GLGTalgo} we summarize the resulting learning algorithm, which
consists of the following four steps. First we use a standard forward pass to
evaluate $F$. We then perform inference and loss-augmented inference as
specified in \equref{eq:LossAugInf} to obtain the prediction $y_w$ and
$\yd$. We combine the predictions to obtain the gradient
$\nabla_w \E{L(y,y_w)}$ via a single backward pass which is then used to
update the parameters. For notational simplicity we omit details like
momentum-based gradient updates, the use of mini-batches, \etc, which are easily included. 

%Importantly we note that \thmref{thm:GLGT} is generally applicable to (i) multi-class classification, (ii) reasoning within structured spaces where both $F$ and $\Delta$ are assumed to decompose into a sum of local functions, and (iii) non-decomposable task-loss functions when assuming availability of an efficient optimization procedure. While the first two cases are directly related to inference in structured spaces, we focus on the latter in the following.

\section{Direct Loss Minimization for Average Precision}

\begin{figure*}
\centering
\fbox{
\begin{minipage}[c]{13.5cm}
{\bf Algorithm: AP loss-augmented inference} 

\begin{enumerate}
\item Set $h(1,0) = \mp\epsilon \frac{1}{|\mcal{P}|}$ and $h(0,1) = 0$
\item For $i = 1, \ldots, |\P|$, $j = 1, \ldots, |\N|$, recursively fill the matrix
\begin{align*}
h(i,j) = \max \begin{cases}
h(i-1,j) \mp  \epsilon \frac{1}{|\mcal{P}|}\frac{i}{i+j} + B(i,j), \quad i \geq 1, j \geq 0\\
h(i,j-1) + G(i,j),\quad i \geq 0, j \geq 1
\end{cases}
\end{align*}
\item Backtrack to obtain configuration $\hat{y}^*$
\end{enumerate}
\end{minipage}
}
\caption{Our algorithm for AP loss-augmented maximization or minimization.}
\label{fig:APDynAlgo}
\end{figure*}
In order to directly optimize the task-loss we are required to compute the gradient defined in \equref{eq:DirectLossGradient}. As mentioned above, we need to solve both the standard inference task as well as the loss-augmented inference problem given in \equref{eq:LossAugInf}. While the former is typically assumed to be solvable, the latter depends on $L$ and might be very complex  to solve, \eg, when the loss is not decomposable.

In this paper we consider ranking problems, where the desired task loss $L$ is average precision (AP), a concrete example of a non-decomposable and non-smooth target loss function.
For the linear setting, efficient algorithms for positive loss-augmented inference with AP loss were proposed by \citet{yue2007support} and \citet{mohapatra2014efficient}. Their results can be extended to the non-linear setting only in the positive case, where $\yd = \argmax_{\hat y \in \Y} F(x,\hat y,w) + \epsilon L(y,\hat y)$. For the negative setting inequalities required in their proof do not hold and thus their method is not applicable. In this section we propose a more general algorithm that can handle both cases with the same time complexity as \citep{yue2007support}, while being more intuitive to prove and understand.
%Therefore we next discuss a dynamic programming algorithm that can be applied in both cases.

Alternatives to optimizing average precision are methods such as RankNet \citep{burges2005learning}, LambdaRank \citep{quoc2007learning} and LambdaMART \citep{BurgesTR2010}. For an overview, we refer the reader to~\citet{BurgesTR2010} and references therein. Our goal here is simply to show direct loss minimization of AP as an example of our general framework. % that AP can be directly mini
%In short, RankNet optimizes a cross-entropy surrogate loss. LambdaRank modifies the surrogate gradient: it multiplies the difference of the score derivative by the change of the loss function resulting from a swap of the considered samples. It has been shown empirically to optimize the loss function, but no formal proof is given. LambdaMART in turn combines decision trees with LambdaRank. Despite good empirical performance, no formal proof shows that those algorithms directly optimize the desired task loss.

To compute the AP loss we are given a set of positive, \ie, relevant, and negative samples. The sample $x_i$ belongs to the positive class if $i\in\P = \{1, \ldots, |\P|\}$, and $x_i$ is part of the negative class if $i\in\N = \{|\P|+1, \ldots, |\P| + |\N|\}$.

We define the  output  to be composed of pairwise comparisons, with %$y_j = (\ldots, y_{i,j}, \ldots)$, $\Y_{i,j} \in \{-1,1\}$, which denotes a ranking of all samples $i$ \wrt sample $j$. Specifically, 
$y_{i,j} = 1$ if sample $i$ is ranked higher than sample $j$, $y_{i,i} = 0$, and $y_{i,j} = -1$ otherwise. We subsumed all these pairwise comparisons in $y = (\ldots, y_{i,j}, \ldots)$. Similarly $x = (x_1, \ldots, x_{N})$ contains all inputs, and $N= |\P| + |\N|$ refers to the total number of data points in the training set. 
In addition, we assume the ranking across all samples $y$ to be complete, \ie, consistent.
During inference we obtain a ranking by predicting scores $\phi(x_i,w)$ for all data samples $x_i$ which are easily sorted afterwards.

For learning we generalize the feature function defined by \citet{yue2007support} and \citet{mohapatra2014efficient} to a non-linear scoring function, using
$$
F(x,y,w) = \frac{1}{|\P||\N|} \sum_{i\in\P,j\in\N} y_{i,j}\left(\phi(x_i,w) - \phi(x_j,w)\right).
$$
where $\phi(x_i,w)$ is the output of the deep neural network when using the  $i$-th example as input. 

%Solving the loss-augmented inference program is just as easy as the inference task itself if the task-loss $\Delta$ decomposes in the same manner. This is however not  the case when considering the loss for the AP metric. 
AP is unfortunately a non-decomposable loss function, \ie, it does not decompose into functions dependent only on  the individual $y_{i,j}$.  
To define the non-decomposable AP loss formally, we construct a vector $\hat p = \operatorname{rank}(\hat y)\in\{0,1\}^{|\P|+|\N|}$ by sorting the data points according to the ranking defined by the configuration $\hat y$. This vector contains a $1$ for each positive sample and a value $0$ for each negative element. 
In applications such as object detection, an example is  said to be positive if the intersection over union of its bounding box and the ground truth box is bigger than a certain threshold (typically 50\%).  
%{\color{blue}given only $\hat y$ how do we define what is positive and what is negative, \ie, where do we set the threshold that divides the positive class from the negative one? I guess we didn't answer that?} 
Using the $\operatorname{rank}$ operator we obtain the AP loss by comparing two vectors $p = \operatorname{rank}(y)$ and $\hat p = \operatorname{rank}(\hat y)$ via
\be
L_{\operatorname{AP}}(p,\hat p) = 1 - \frac{1}{|\P|}\sum_{j:\hat{p}_j=1} \operatorname{Prec@j},
\label{eq:APLoss}
\ee
where $\operatorname{Prec@j}$ is the percentage of relevant samples in the prediction $\hat p$ that are ranked above position $j$.

%Importantly, efficient algorithms for loss-augmented inference with AP loss were proposed by \cite{yue2007support} and \cite{mohapatra2014efficient}. Unfortunately the presented solutions can only handle positive loss-augmentation.
%To obtain a fast algorithm for both positive and negative loss-augmentation as required in 
To solve the loss-augmented inference task we have to solve the following program
%\equref{eq:LossAugInf}, we next propose a general dynamic programming solution for
\be
\arg\max_{\hat y} F(x,\hat y,w) \pm \epsilon L_{\operatorname{AP}}(\operatorname{rank}(y),\operatorname{rank}(\hat y)).
\label{eq:LossAugInf2}
\ee
%which has the same computational complexity as the algorithm presented by \cite{yue2007support}.

%In the following we investigate the program for loss-augmented inference for all samples $i$ given by
%$$
%\yd = \argmax_{\hat y} \sum_i F(x_i,\hat y_i,w) - \epsilon \Delta_{\operatorname{AP}}(y_i,\hat y_{i})
%$$
%more carefully. Specifically we want to design a dynamic programming algorithm for
%$$
%\frac{1}{|\mcal{P}||\mcal{N}|} \sum_{i\in \mcal{P}}\sum_{j\in \mcal{N}}  y_{w,i,j}\left( F(x_i,w) - F(x_j,w)\right) - \epsilon \Delta_{AP}(y,y_w).
%$$

In the following we  derive a dynamic programming algorithm that can handle both the positive and negative case and has the same complexity  as \citep{yue2007support}. 
Towards this goal, we first note that Observation 1 of \citet{yue2007support} holds for both the positive and the negative case. For completeness, we repeat their observation here and adapt it to our notation.

\begin{observation}[\citet{yue2007support}]
Consider rankings which are constrained by fixing the relevance at each position in the ranking (\eg, the 3rd sample in the ranking must be relevant). Every ranking  satisfying the same set of constraints will have the same $L_{AP}$. If the positive samples are sorted by their scores in descending order, and the irrelevant samples are likewise sorted by their scores, then the interleaving of the two sorted lists satisfying the constraints will maximize \equref{eq:LossAugInf2} for that constrained set of rankings. %{\color{red} is the sense of positive now working for both positive/negative?}
\end{observation}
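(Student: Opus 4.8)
The plan is to split the statement into its two assertions and dispatch them in order, since the second relies on the first. First I would establish that $L_{\operatorname{AP}}$ is constant across the constrained set. By definition \equref{eq:APLoss}, the loss $L_{\operatorname{AP}}(p,\hat p)$ depends on $\hat y$ only through $\hat p = \operatorname{rank}(\hat y)\in\{0,1\}^{|\P|+|\N|}$, the binary relevance vector read off position-by-position. Fixing the relevance at every position fixes $\hat p$ exactly: permuting the actual positive samples among the positive slots, or the negatives among the negative slots, leaves $\hat p$ untouched, so each $\operatorname{Prec@j}$ and hence $L_{\operatorname{AP}}$ is identical for all rankings obeying the constraints. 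This is precisely the first assertion.

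With $L_{\operatorname{AP}}$ constant on the constrained set, the term $\pm\epsilon L_{\operatorname{AP}}(\operatorname{rank}(y),\operatorname{rank}(\hat y))$ appearing in \equref{eq:LossAugInf2} is a fixed additive constant there. Consequently, maximizing the loss-augmented objective over the constrained rankings is equivalent to maximizing $F(x,\hat y,w)$ alone, and this reduction is insensitive to the sign $\pm$; this is exactly why the observation applies verbatim to both the positive and the negative case.

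It then remains to show that, among all rankings with the prescribed relevance pattern, the interleaving sorting the positives by descending score and the negatives by descending score maximizes $F$. Here I would use an exchange (bubble-sort) argument. The key simplification is that $F(x,\hat y,w)=\frac{1}{|\P||\N|}\sum_{i\in\P,j\in\N}\hat y_{i,j}(\phi(x_i,w)-\phi(x_j,w))$ involves only positive–negative pairs, so swapping two positives among their slots (or two negatives among theirs) affects only the terms $\hat y_{i,j}$ for opposite-class samples lying strictly between the two swapped positions. Suppose two positives $i_1,i_2$ with $\phi(x_{i_1},w)>\phi(x_{i_2},w)$ are ranked out of order, the lower-scoring $i_2$ sitting above $i_1$. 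Swapping them flips $\hat y_{i_1,j}$ and $\hat y_{i_2,j}$ only for negatives $j$ strictly between, and a direct computation shows each such negative contributes a change of $+2(\phi(x_{i_1},w)-\phi(x_{i_2},w))\geq 0$ to $F$; the symmetric computation handles out-of-order negatives. Hence no out-of-order swap ever decreases $F$, and repeatedly applying such swaps drives any constrained ranking to the doubly-sorted interleaving without loss, so that configuration attains the maximum.

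I expect the exchange step to be the only genuine obstacle. One must verify carefully that positive–positive and negative–negative pairs never enter $F$ (so they are unaffected by any swap), identify precisely which opposite-class terms flip, and confirm the sign of the resulting change. Strict improvement requires distinct scores; ties are broken arbitrarily and yield equal objective value, so the doubly-sorted interleaving is an essentially unique maximizer. Finally, I would invoke finiteness of $\Y$ to guarantee that iterating these monotone swaps terminates at this configuration.
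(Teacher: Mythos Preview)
Your argument is correct. The paper does not actually prove this observation: it is quoted verbatim from \citet{yue2007support} and invoked without proof, so there is no in-paper argument to compare against. Your two-step structure---first freezing $L_{\operatorname{AP}}$ via the relevance pattern, then reducing \equref{eq:LossAugInf2} to maximizing $F$ alone and applying an adjacent-swap exchange argument within each class---is exactly the standard route and matches how \citet{yue2007support} justify it. Your remark that the reduction is insensitive to the sign $\pm$ is precisely the point the present paper needs, since \citet{yue2007support} only treat the positive case. The one cosmetic slip is the missing $\tfrac{1}{|\P||\N|}$ factor in the per-negative increment, which of course does not affect the sign.
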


Observation~1 means that we only need to consider the interleaving of two sorted lists of $\mcal{P}$ and $\mcal{N}$ to solve the program given in \equref{eq:LossAugInf2}. From now on we therefore assume that the elements of $\mcal{P}$ and $\mcal{N}$ are sorted in descending order of their predicted score. %We denote the elements of $\mcal{P}$ and $\mcal{N}$ as $d_1^x,d_2^x,\cdots,d_m^x$ and $d_1^{\bar{x}},d_2^{\bar{x}},\cdots,d_n^{\bar{x}}$ respectively.

We next assert the optimal substructure property of our problem. Let the restriction to subsets of $i$ positive and $j$ negative examples be given by $\mcal{P}_i = \{1, \ldots, i\}$ and $\mcal{N}_j = \{|\P|+1, \ldots, |\P| + j\}$. 
%We denote the problem restricted to $\mcal{P}_i$ and $\mcal{N}_j$ as $\mcal{F}_{i,j}$.
The cost function value obtained when restricting loss-augmented inference to the subsets can be computed as:
\be
\begin{split}
 &h(i,j)\\ =&\max_{\hat y}  \frac{1}{|\mcal{P}||\mcal{N}|} \sum_{m\in \mcal{P}_i}\sum_{n \in \mcal{N}_j} \hat y_{m,n}(\phi(x_{m},w) - \phi(x_{n},w))\\
 &\pm \epsilon L_{AP}^{i,j}(\operatorname{rank}(y),\operatorname{rank}(\hat y)),
 \end{split}
\label{eq:OptSub}
\ee
where $L_{AP}^{i,j}$ refers to the AP loss restricted to subsets of $i$ positive and $j$ negative elements.
%and we also denote $f(i,j) = \max_y h(i,j)$.

%We can prove the following lemma:
\begin{lemma}\label{lem:ldp}
Suppose that $\operatorname{rank}(\hat y^\ast)$ is the optimal ranking for \equref{eq:OptSub} when restricted to $i$ positive and $j$ negative samples. Any of its sub-sequences starting at position 1 is then also an optimal ranking for the corresponding restricted sub-problem.
\end{lemma}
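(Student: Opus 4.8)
The plan is to establish the optimal substructure by a standard cut-and-paste argument, whose success hinges entirely on showing that the objective in \equref{eq:OptSub} splits additively across any prefix/suffix cut of the ranking. Fix the optimal interleaving $\operatorname{rank}(\hat y^\ast)$ for the $(i,j)$-restricted problem and consider its prefix occupying positions $1,\ldots,k$. By Observation~1 this prefix consists of exactly the top $i'$ positives and top $j'$ negatives for some $i'+j'=k$, so it is a feasible configuration for the $(i',j')$ sub-problem $h(i',j')$. First I would write the $(i,j)$-objective at $\hat y^\ast$ as a sum of three groups of terms: those internal to the prefix, those internal to the suffix, and the cross terms pairing a prefix element with a suffix element.

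The key step is to argue that the prefix-internal contribution is \emph{exactly} the $(i',j')$ sub-objective, while the suffix-internal and cross contributions are invariant to how the prefix is ordered internally. For the score $F$, the prefix-internal pairwise sum runs over $\P_{i'}\times\N_{j'}$ and carries the same normalization $\tfrac{1}{|\P||\N|}$ as in \equref{eq:OptSub}, so it coincides with the sub-problem score. Every cross pair $(m,n)$ has its sign $\hat y_{m,n}$ fixed once we know the partition — $+1$ when the positive lies in the prefix, $-1$ when it lies in the suffix — because all prefix elements outrank all suffix elements; hence the cross-term sum depends only on the set of prefix elements, i.e.\ on $(i',j')$, and not on their internal order. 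For the loss, since $\operatorname{Prec@}t$ depends only on positions $1,\ldots,t$, the precision at each prefix position is unaffected by the suffix, yielding $L_{AP}^{i,j} = L_{AP}^{i',j'} - \tfrac{1}{|\P|}\sum_{t>k:\,\hat p_t=1}\operatorname{Prec@}t$, where the subtracted sum depends on $i'$ and the suffix ordering but not on the prefix ordering.

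Combining these, the $(i,j)$-objective at $\hat y^\ast$ equals the $(i',j')$-objective of the prefix plus a remainder that is a function only of $(i',j')$ and of the suffix arrangement. I would then close by contradiction: if the prefix were suboptimal for the $(i',j')$ sub-problem, substituting a strictly better sub-solution (which has the same counts $i',j'$, hence leaves the partition and the untouched suffix a valid interleaving) would strictly increase the $(i,j)$-objective while the remainder stays fixed, contradicting the optimality of $\hat y^\ast$. Running this for every prefix length $k$ gives the claim for all sub-sequences starting at position~1.

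I expect the main obstacle to be the score-decomposition step, namely verifying rigorously that the cross terms are invariant under reordering of the prefix; this is precisely where the ``prefix outranks suffix'' consequence of Observation~1 is needed to pin down each cross-pair sign. Once that invariance is in hand, the additive split and the cut-and-paste conclusion are routine, and the loss decomposition follows immediately from the prefix-locality of $\operatorname{Prec@}t$.
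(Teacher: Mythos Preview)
Your proposal is correct and follows the standard cut-and-paste argument for optimal substructure that the paper's framework clearly anticipates: the paper defers the full proof to its supplementary material, but the Bellman recursion it states immediately after the lemma is precisely the additive decomposition you spell out (prefix-internal score matching the sub-objective, cross-pair signs fixed by the prefix/suffix partition via Observation~1, and $\operatorname{Prec@}t$ contributions splitting by position), so your argument is essentially the same route the paper takes.
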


%We provide the proof of this lemma in the Appendix, \secref{sec:app:lemproof}.
We provide the proof of this lemma in the supplementary material.
%\begin{proof}
%We consider the prefix $x_1,x_2,\cdots,x_{k}$, where $k < m+n$ for illustrating purpose. Suppose there are $i$ relevant objects and $j$ irrelevant objects in the prefix, and $k=i+j$. What we need to prove is that $x_1,x_2,\cdots,x_{i+j}$ is already an optimal ranking for problem $\mcal{F}_{i,j}$. 
%
%We can decompose $f(m,n)$ into three parts:
%\begin{align*}
%f(m,n) &= \underbrace{\frac{1}{|\mcal{P}||\mcal{N}|}\sum_{i: d_i^x\in \mcal{P}_i}\sum_{j:d_j^{\bar{x}} \in \mcal{N}_j} y_{ij} (s(d_i^x) - s(d_j^{\bar{x}})) + \epsilon \Delta^{i,j}_{map}(y,\bar{y})}_{\text{Prefix terms = $h(i,j)$}} \\
%&+ \underbrace{\frac{1}{|\mcal{P}||\mcal{N}|}\sum_{i:d_i^x \in \mcal{P} \setminus \mcal{P}_i}\sum_{j:d_j^{\bar{x}} \in \mcal{N} \setminus \mcal{N}_j} y_{ij} (s(d_i^x) - s(d_j^{\bar{x}})) + \epsilon \Delta^{\neg i,\neg j}_{map}(y,\bar{y})}_{\text{Suffix terms}}\\
%&+\underbrace{\frac{1}{|\mcal{P}||\mcal{N}|}\sum_{i:d_i^x\in \mcal{P}_i}\sum_{j:d_j^{\bar{x}} \in \mcal{N} \setminus \mcal{N}_j} y_{ij} (s(d_i^x) - s(d_j^{\bar{x}}))}_{\text{Cross terms}},
%\end{align*}
%from which it is clear that when changing the interleaving pattern of the prefix, the suffix terms and cross terms remain the same. Suppose that $h(i,j) \neq f(i,j)$, then we can substitute $f(i,j)$ into the prefix term and get a larger value than $f(m,n)$, contradicting the fact that $f(m,n)$ is already the largest. Hence the lemma has been proved.
%\end{proof}
Based on \lemref{lem:ldp} we can construct Bellman equations to recursively fill in a matrix of size $\P\times\N$, resulting in an overall time complexity of $O(|\mcal{P}||\mcal{N}|)$. %, just like the method presented by~\citet{yue2007support}.
We can then obtain the optimal loss-augmented predicted ranking via back-tracking. %We provide the details in the appendix.

The Bellman recursion computes the optimal cost function value of the sub-sequence containing data from $\P_i$ and $\N_j$, based on previously computed values as follows:
\begin{eqnarray*}
&&\hspace{-0.4cm}h(i,j) =\\
&&\hspace{-0.4cm} \max \begin{cases}
h(i-1,j) \mp  \epsilon \frac{1}{|\mcal{P}|}\frac{i}{i+j} + B(i,j) &\hspace{-0.2cm} i \geq 1, j \geq 0\\
h(i,j-1) + G(i,j) &\hspace{-0.2cm}i \geq 0, j \geq 1
\end{cases},
\end{eqnarray*}
with  initial conditions  $h(1,0) = \mp\epsilon \frac{1}{|\mcal{P}|}$ and $h(0,1) = 0$. Note that 
 we used the pre-computed matrices of scores
\begin{align*}
B(i,j) &=- \frac{1}{|\mcal{P}||\mcal{N}|}\sum_{k\in \mcal{N}_j} (\phi(x_i,w) - \phi(x_k,w))\\
G(i,j) &= \frac{1}{|\mcal{P}||\mcal{N}|}\sum_{k \in \mcal{P}_i} (\phi(x_k,w) - \phi(x_j,w)).
\end{align*}
Intuitively, the Bellman recursion considers two cases: (i) how does the maximum score $h(i,j)$ for the solution restricted to sequences $\P_i$ and $\N_j$ change if we add a new positive sample; (ii) how does the maximum score $h(i,j)$ change when adding a new negative sample. In both cases we need to add the respective scores $B(i,j)$ or $G(i,j)$. When adding a positive sample we additionally need to consider a contribution from the precision which contains $i$ positive elements from a total of $i+j$ elements.

The matrices $B(i,j)$ and $G(i,j)$ store the additional contribution to \equref{eq:OptSub} obtained when adding a positive or a negative sample respectively. %\yangs{The intuition: when adding a new positive sample, the sum in Eq.~(4) will have additional terms, denoted by $F(i,j)$; when adding a new negative sample, the sum will similarly have additional terms denoted by $G(i,j)$. In contrary to adding a positive sample, the $\Delta_{AP}$ will not change.}
They can be efficiently  computed ahead of time using the recursion
\begin{align*}
B(i,j) &= B(i,j-1) - \frac{1}{|\mcal{P}||\mcal{N}|}(\phi(x_i,w) - \phi(x_j, w)) \\
G(i,j) &= G(i-1,j) + \frac{1}{|\mcal{P}||\mcal{N}|}(\phi(x_i,w) - \phi(x_j, w)),\\
\end{align*}
with initial conditions $B(i,0) = 0$, $G(0,j) = 0$. 

We summarize our dynamic programming  algorithm for AP loss-augmented inference in \figref{fig:APDynAlgo}.
Note that after having completed the matrix $h(i,j)$ we can backtrack to obtain the best loss-augmented ranking. We hence presented a general solution to solve positive and negative loss-augmented inference defined in \equref{eq:LossAugInf} for the AP loss given in \equref{eq:APLoss}. %We next demonstrate the applicability of our approach.

%{\color{blue} to be continued...}

\section{Experimental Evaluation}
\label{sec:exp}

To evaluate the performance of our approach we perform experiments on both synthetic  and real datasets. We compare the positive and negative version of our direct loss minimization approach to a diverse set of baselines. %In the following we first provide a summary for their individual gradient updates.

\subsection{Synthetic data}
\paragraph{Dataset:} We generate synthetic data via a neural network which
assigns scalar scores to input vectors.
The neural network consists of four layers of rectified linear units with 
 parameters  randomly drawn
from independent  Gaussians with zero mean and unit variance.
The input for a training example is drawn 
from a 10 dimensional  Gaussian,
and the output produced by the network is its score.
We generate 20,000 examples, and sort them in descending order based on
their scores. The top $20\%$
of the samples are assigned to the positive set $\P$ and the remaining examples to the negative set. We then  randomly divide
the generated data into a training set containing 10,000 elements and a test
set containing the rest. We  compare various loss functions in terms
of their ability to train the network to effectively reproduce the original
scoring function.
To ensure that we do not suffer from model
mis-specification we employ the same network structure when training the
parameters from random initializations.
This synthetic experiment provides a good test environment to compare
these training methods, as the mapping from input to scores is fairly complex
yet deterministic.

\paragraph{Algorithms:}
We evaluate the positive and negative versions of our direct loss minimization when using two different task losses: AP and 0-1 loss. 
For the latter we predict for each sample $x_i$ whether it is a member of the set $\P$ or whether it is part of the set $\N$.
%The latter can be obtained by treating each ranking pair $y_{i,j}$ as a classification task, encoding whether the  $i$-th element is ranking higher than the $j$-th.  \yangs{This sentence is problematic. We don't do classification on $y_{i,j}$.}
We named these algorithms, ``pos-AP,'' ``neg-AP,'' ``pos-01,'' and ``neg-01.''
We also evaluate training the network using hinge loss, when employing AP and
0-1 loss as the task losses. We called these baselines ``hinge-AP'' and
``hinge-01.'' 
Note that ``hinge-AP'' is equivalent to the approach of Yue et al. \citet{yue2007support}.
We use the perceptron updates as additional baselines, which we call ``per-AP'' and ``per-01.'' Finally, the last baseline uses maximum-likelihood (\ie, cross entropy) to train the network. We call this approach ``x-ent.'' 
%Those are the positive and negative update rules for direct loss minimization as specified in \equref{eq:LossAugInf} and referred to with letters `P' and `N' respectively. Baselines are optimization of hinge loss (`hinge'), the perceptron update (`Per') and cross-entropy loss (`logistic'). We also show results when optimizing for 0-1-loss rather than AP loss, which is indicated by appending `01' to the algorithm abbreviation. 
The parameters of all algorithms are individually determined via grid search to produce the best AP on the training set. %\yangs{For algorithms optimizing AP, we searched their threshold on training data to produce the best accuracy.}

\begin{figure*}
\centering
\subfigure[]{
\label{fig:AP}\includegraphics[width=6.44cm,height=6.44cm]{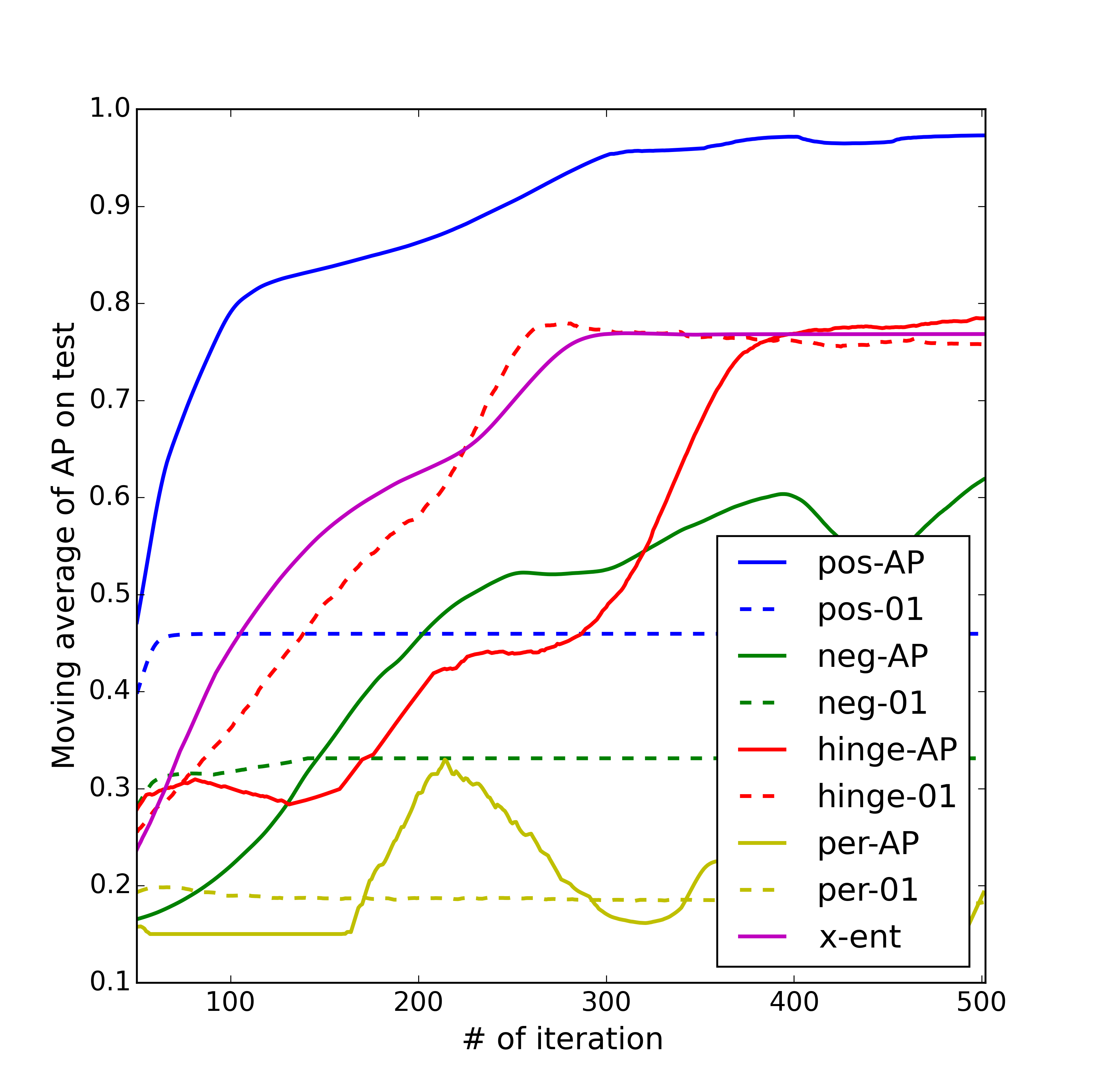}
}
\subfigure[]{
\label{fig:synflip}\includegraphics[width=6.44cm,height=6.44cm]{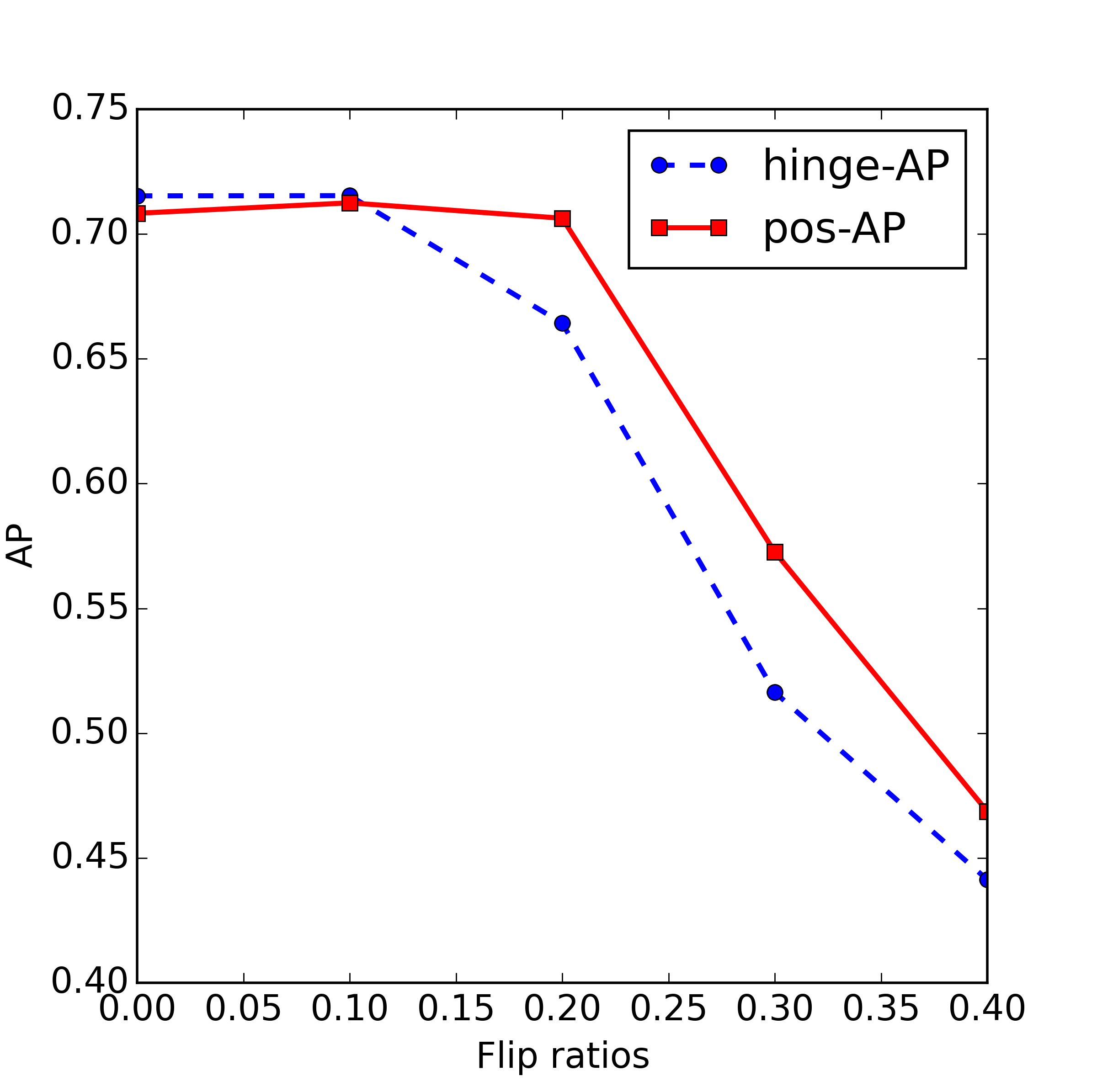}
}
\caption{Experiments on synthetic data: 
\subref{fig:AP} Average Precision (AP) on the test set as a function of the number of
iterations (best view in color). 
\subref{fig:synflip} The robustness of pos-AP compared to hinge-AP. } 
\label{fig:APall}
\end{figure*}

\paragraph{Results:}
 \figref{fig:APall} shows AP results on the test set.
We observe that the perceptron update does not perform well, since it does not
take the task loss into account and has convergence issues. 
%Also, perceptron methods do not converge
%for data that are not linearly separable, which means that they cannot perform
%well in a noisy situation or on complex data.  {\color{red} do you really mean convergence for perceptron?}\yangs{I believe so.}
Contrary to the claim in \citep{hazan2010direct} for linear models, the negative update for direct loss  is not competitive in our setting (\ie, non-linear models). This might be due to the fact
that it tends to overly correct the classifier in noisy situations. Note the
resemblance of the perceptron method and the negative direct loss minimization
update: they are both trying to move ``towards better.'' Therefore we expect
the negative update to behave similarly to the perceptron, \ie, more
vulnerable to noisy data. This resemblance also explains their
strong performance on the TIMIT dataset \citep{cheng2009matrix,hazan2010direct},
which is relatively easy and can be handled well with linear models. For the
same reason, we expect the negative update to lead to better performance in
less noisy situations, \eg, if the data is nearly linearly separable. To
further examine this hypothesis we provide additional experimental results in
the supplementary material.%appendix, \secref{sec:app:AddSynResult}.  

The direct loss minimization of 0-1 loss does not work well. It is likely that the sharp changes of the 0-1 loss result in a ragged energy landscape that is hard to optimize, while smoothing
as performed by AP loss or surrogate costs helps in this setting. 
%{\color{red} what do you mean by noisy data?This is the case that you don't flip labels, no?}\yangs{Let's delete ``and the noisy data''}

The hinge-AP algorithm of 
\citet{yue2007support} performs slightly better than other algorithms
based on 0-1 loss. This shows 
that taking the AP loss into consideration helps improve the performance
measured by AP. It is also important to note that when employing positive
updates our direct loss minimization outperforms all baselines by a large
margin. Note the resemblance between the structural SVM update and positive direct
loss minimization: they are both moving ``away from worse.'' 
We believe that hinge loss does not deal with label noise well as the update depends on the ground truth more than the direct loss update. Another possible reason is that the hinge loss upper bound is looser in this case. Taking the 0-1 loss as an example, the hinge loss of each outlier is much larger than the cost measured by the 0-1 loss (which is at most 1) in noisy case. This illustrates why the gap between hinge loss and target loss is large in noisy situations in general.
%The reason why
%the structural SVM update does not perform as well as direct loss minimization
%is that the gap between hinge-AP loss and the AP loss is significant, and thus optimizing an upper bound is not good enough. %, due to
%the definition of positive objects. %in the synthetic data (the label of each object entangles with each other). 
%{\color{blue}I don't understand this last sentence}.\yangs{Since we sort the scores and take the first 20\% as positive, the labels of one object do not only depend on itself. Even a human being can easily make mistakes in this case.}

To further examine the performance of pos-AP and hinge-AP, we performed
another small scale synthetic experiment. We hypothesize that when the
data contains outliers, and more generally the labels are noisy, 
hinge loss will become a worse approximation.
We control the noise level in the data as a way of testing this hypothesis.
%control the quality of this approximation. 
We randomly generate 1000
10-dimensional data from $\mathcal{N}(0,10)$. Datum sample $x$ is assigned to
be positive when $\norm{x}_2^2 > 1200$ and negative when $\norm{x}_2^2 <
1000$. The noise is incorporated by randomly flipping a fixed percentage of labels. We use the same neural network structure for this task, tune the
parameters on the training set, and report their results on the independent
test set. As shown in \figref{fig:synflip}  our method, pos-AP, is more robust to noise. %, which means that direct loss minimization is better than surrogate-based methods, particularly when the gap between the surrogate loss and the real loss is significant.

Based on the results obtained from the synthetic experiments,
during the experimental evaluation on real datasets
we focus on comparing the positive non-linear direct
loss minimization (pos-AP) to the strong baseline of hinge-AP
\citet{yue2007support},
as well as the standard approach of training based on cross-entropy.
%datasets. {\color{blue}Didn't we run some experiments on real data using the
%  negative update as well? What were the results again?} 
%{\color{blue}And weren't there some experiments with varying amounts
%of noise in the scores?}\yangs{We did ran negative direct loss minimization on the following action classification task. Empirically its performance lies between logistic loss and structural SVM and thus being less favorable.}

\subsection{Action classification task}
%\begin{table*}
%\centering
%\begin{tabular}{l||cccccccccc||c}
% & \rotatebox{90}{jumping} & \rotatebox{90}{phoning} & \rotatebox{90}{playing instrument} & \rotatebox{90}{reading} & \rotatebox{90}{riding bike} & \rotatebox{90}{riding horse} & \rotatebox{90}{running} & \rotatebox{90}{taking photo} & \rotatebox{90}{using computer} & \rotatebox{90}{walking} & \rotatebox{90}{\bf average}\\
%\hline\hline
%Logistic & 76.6 & 45.4 & 72.6 & 50.3 & 92.3 & 89.1 & 82.3 & 41.2 & 68.7 & 61.3 & 67.89\\
%\hline
%Hinge & 76.3 & 43.9 & 72.4 & 48.3 & 91.6 & 88.1 & 80.6 & 39.8 & 68.0 & 59.7 & 66.87\\
%\hline
%Hinge-AP & 77.0 & 46.2 & 74.0 & 50.6 & 92.9 & 92.0 & 84.0 & 45.8 & 68.3 & 68.0 & 69.88\\
%\hline
%Ours & 77.3 & 44.8 & 74.0 & 50.6 & 93.1 & 92.4 & 84.8 & 43.2 & 66.5 & 68.2 & 69.49\\
%\hline
%\end{tabular}
%\caption{Comparison of our direct AP loss minimization approach to various
%  surrogate 
% loss function optimization methods on the action classification task.
%Note that there is no noise in the labels here.} 
%\label{tab:ac}
%\end{table*}
\begin{table*}
\begin{center}
\begin{adjustbox}{max width=\textwidth}
\setlength{\tabcolsep}{4pt}
\begin{tabular}{c|ccc||ccc||ccc||ccc||ccc}
noise level & \multicolumn{3}{c||}{0} & \multicolumn{3}{c||}{10\%} & \multicolumn{3}{c||}{20\%} & \multicolumn{3}{c||}{30\%} & \multicolumn{3}{c}{40\%}\bigstrut\\
\hline
method & x-ent & hinge-AP & pos-AP & x-ent & hinge-AP & pos-AP & x-ent & hinge-AP & pos-AP & x-ent & hinge-AP & pos-AP & x-ent & hinge-AP & pos-AP \bigstrut \\
\hline
jumping & 76.6 & 77.0 & \textbf{77.3} & 68.7&  64.0&	\textbf{74.0}&	51.6&	44.9&	\textbf{65.1}&	42.3&	\textbf{42.4}&	36.3&	22.8&	27.2&	\textbf{52.0} \bigstrut[t]\\
phoning & 45.4&	\textbf{46.2}&	44.8&	36.3&	31.2&	\textbf{39.2}&	25.7&	22.1&	\textbf{35.4}&	11.8&	10.8&	\textbf{15.4}&	9.5&	10.2&	\textbf{16.8}\\
playing instrument
 & 72.6&	\textbf{74.0}&	\textbf{74.0}&	67.9&	67.6&	\textbf{71.4}&	60.6&	57.3&	\textbf{69.9}&	38.3&	40.0&	\textbf{62.8}&	25.1&	15.9&	\textbf{60.6}\\
reading
 & 50.3&	\textbf{50.6}&	\textbf{50.6}&	40.1&	36.8&	\textbf{43.4}&	27.0&	22.1&	\textbf{40.1}&	\textbf{17.9}&	15.9&	17.3&	13.8&	\textbf{15.8}&	9.6\\
riding bike
 & 92.3&	92.9&	\textbf{93.1}&	84.9&	88.2&	\textbf{90.1}&	72.9&	73.3&	\textbf{88.2}&	54.1&	40.6&	\textbf{79.3}&	\textbf{32.9}&	18.3&	22.3\\
riding horse
 & 89.1&	92.0&	\textbf{92.4}&	78.4&	82.8&	\textbf{84.9}&	70.2&	77.8&	\textbf{79.4}&	45.7&	48.0&	\textbf{69.7}&	25.1&	28.4&	\textbf{53.2}\\
running
 & 82.3&	84.0&	\textbf{84.8}&	\textbf{77.9}&	76.9&	76.2&	64.6&	57.3&	\textbf{75.8}&	40.5&	44.0&	\textbf{71.3}&	17.7&	9.3&	\textbf{29.8}\\
taking photo
 & 41.2&	\textbf{45.8}&	43.2&	33.0&	\textbf{40.7}&	33.0&	19.0&	21.7&	\textbf{22.4}&	15.9&	12.1&	\textbf{19.5}&	11.0&	\textbf{11.3}&	8.4\\
using computer
 & \textbf{68.7}&	68.3&	66.5&	57.6&	59.3&	\textbf{60.2}&	42.3&	45.1&	\textbf{56.0}&	21.7&	21.5&	\textbf{41.4}&	13.2&	\textbf{15.1}&	11.1\\
walking
 & 61.3&	68.0&	\textbf{68.2}&	51.9&	\textbf{53.3}&	52.2&	39.0&	35.1&	\textbf{46.1}&	29.1&	24.6&	\textbf{46.3}&	11.2&	16.2&	\textbf{32.6}\\
\hline
mean & 68.0&	\textbf{69.9}&	69.5&	59.7&	60.1&	\textbf{62.5}&	47.3&	45.7&	\textbf{57.8}&	31.7&	30.0&	\textbf{45.9}&	18.2&	16.8&	\textbf{29.6}\bigstrut[t]
\end{tabular}
\end{adjustbox}
\end{center}
\caption{Comparison of our direct AP loss minimization approach to two strong
  baselines, which utilize surrogate loss functions, on the action
  classification task. Each method is evaluated for various amounts of label
  noise in the test dataset. 
  } \label{tab:ac}
\end{table*}
\paragraph{Dataset:} In the next experiment we use the PASCAL VOC2012 action
classification dataset provided by \citet{everingham2014pascal}. The dataset
contains 4588 images and 6278 ``trainval'' person bounding boxes.  For each of
the 10 target classes, we divide the trainval dataset  into equal-sized
training, validation and test sets. We tuned the learning rate, regularization
weight, and $\epsilon$ for all the algorithms based on their performance on
the validation dataset, and report the results on the test set. %All
                                %algorithms are tested with same settings with
                                %respect to batch size, iteration numbers and
                                %parameter tuning approaches. 
For all algorithms we used the entire available training set in a single batch and performed 300 iterations. %{\color{red} what were those settings?}\yangs{Always use the whole data as one batch. (different for different classes) and use 300 iterations.}

\paragraph{Algorithms:}
We train our non-linear direct loss minimization as well as all the baselines
individually for each class. As baselines we again use a deep network trained with
cross entropy  and also consider the structured SVM method
proposed by \citet{yue2007support}. 
The deep network used in these experiments follows the architecture
of~\citet{krizhevsky2012imagenet}, with the top dimension adjusted to a single
output. We initialize 
the parameters using the weights trained on
ILSVRC2012~\citep{ILSVRC15}. Inspired by the RCNN ~\citep{girshick2014rich},
we cropped the regions of each image with a padding of 16 pixels and
interpolated them to a size of $227\times227\times3$ to fit the input data
dimension of the network. All the algorithms we compare to as well as our
approach use raw pixels as input.

\paragraph{Results:} Intuitively we expect direct loss minimization to outperform surrogate loss
functions whenever there is a significant number of outliers in the data. To
evaluate this hypothesis, we conduct experiments by randomly flipping  a fixed number of labels.  %the labels of the images randomly flipped, while reporting results on the standard data. 
Our experiments shown in \tabref{tab:ac} and \figref{fig:flip} confirm our intuitions, and direct loss works much better than hinge loss in the presence of label noise. When there is no noise, both algorithms perform similarly. 
%We provide quantitative results in \tabref{tab:ac}. The results show that
%incorporating AP loss is helpful for getting higher AP performance on the test
%dataset. 
%When the labels are not noisy, the performance of non-linear direct
%loss minimization and structured SVM are similar. 
%However, as the ratio of labels flipped during training increases, the
%performance of hinge-AP degrades significantly more than that of pos-AP.
%In \figref{fig:flip} we illustrate the change of mean AP of all classes \wrt the flip ratio.  
%We observe that direct loss minimization performs better than surrogate loss
%minimization when there are many outliers, \ie, when the flip ratio
%increases. This behavior is expected because hinge loss is a bad approximation
%to the true loss when there are a lot of outliers. The superior performance of
%direct loss minimization in situations where surrogates cannot perform well
%demonstrates the importance of minimizing the structured loss directly. 

\begin{figure}[t]
\centering
\includegraphics[width=\columnwidth]{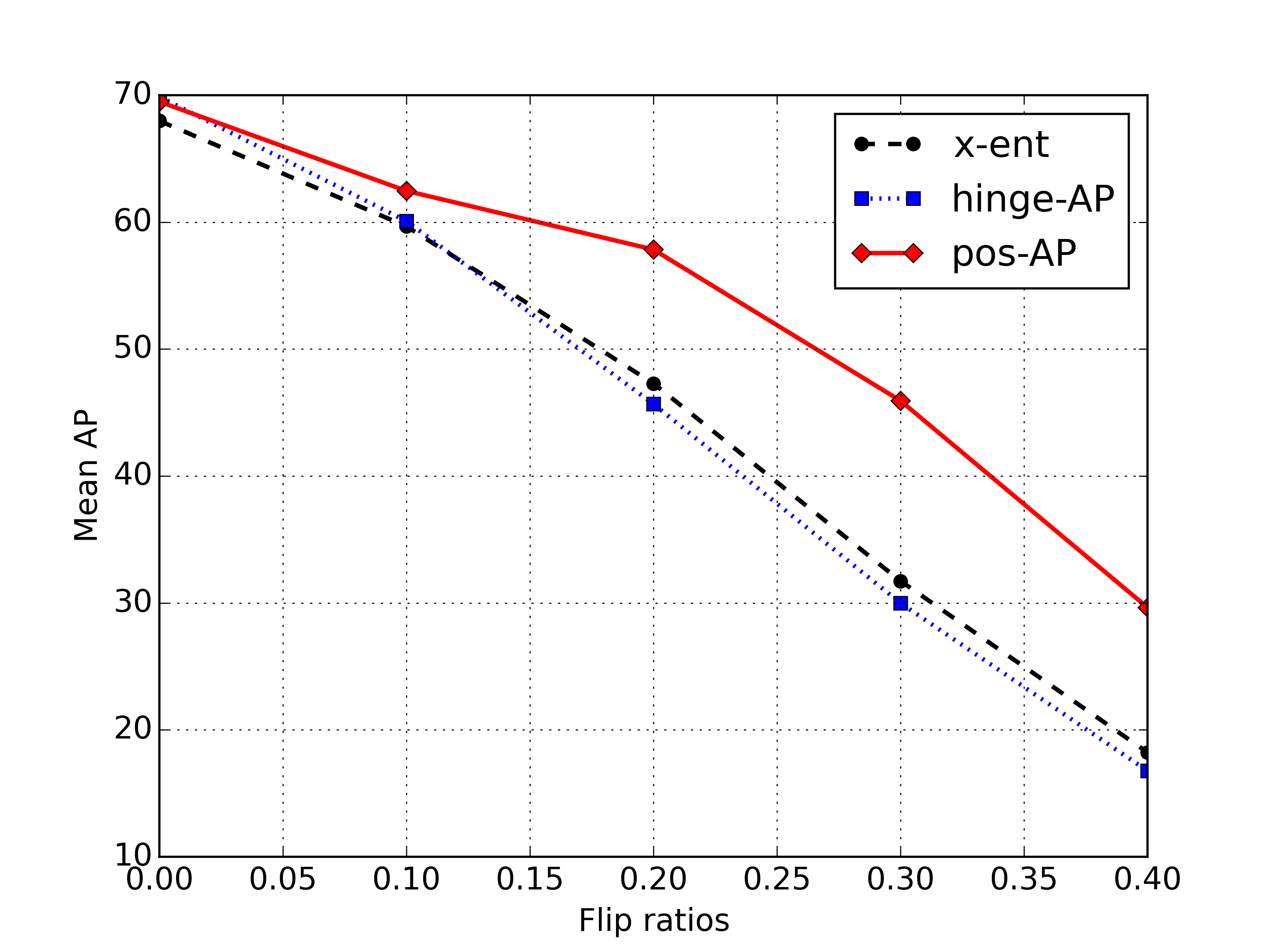}
\caption{This figure summarizes the effect of label noise in the action
  classification task, by showing the mean AP on the test set for different
  proportions of flipped labels. We compare the performance of the strongest
  baseline method, the hinge-loss trained network which uses AP  as the task
  loss, versus our direct loss method, pos-AP.}  
\label{fig:flip}
\end{figure}

\subsection{Object detection task}
\begin{table*}
\setlength{\tabcolsep}{4pt}
\centering
\begin{adjustbox}{max width=\textwidth}
\begin{tabular}{l|cccccccccccccccccccc|c}
 & \rotatebox{90}{aeroplane} & \rotatebox{90}{bicycle} & \rotatebox{90}{bird} & \rotatebox{90}{boat} & \rotatebox{90}{bottle} & \rotatebox{90}{bus} & \rotatebox{90}{car} & \rotatebox{90}{cat} & \rotatebox{90}{chair} & \rotatebox{90}{cow} & \rotatebox{90}{diningtable}
& \rotatebox{90}{dog} & \rotatebox{90}{horse} & \rotatebox{90}{motorbike} & \rotatebox{90}{person} & \rotatebox{90}{pottedplant} & \rotatebox{90}{sheep} & \rotatebox{90}{sofa} & \rotatebox{90}{train} & \rotatebox{90}{tvmonitor} & \rotatebox{90}{mean} \bigstrut[t]\\
\hline
%Softmax & 66.4 & 57.8 & 42.4 & 26.1 & 25.6 & 62.3 & 52.9 & 61.0 & 23.9 & \textbf{44.6} & 34.7 & 56.4 & 51.0 & 
%\textbf{65.7} & 51.2 & \textbf{29.8} & \textbf{50.5} & 29.6 & 48.7 & 60.0 & 47.0 \bigstrut[t]\\
%\hline
x-ent (0 label noise) & 63.8 & \textbf{61.0} & 42.6 & 30.7 & 23.5 & 63.2 & 51.7 & 58.5 & 20.1 & 37.0 & 32.0 & 52.8 & 50.8 & 62.5 & 50.1 & 23.5 & 48.3 & 33.1 & 48.5 & 57.4 & 45.6 \bigstrut[t]\\
hinge-AP (0 label noise) & \textbf{67.5} & 60.6 & 43.6 & 30.8 & 25.3 & 64.5 & \textbf{54.9} & \textbf{64.4} & 21.9 & 34.5 & 34.2 & 57.0 & 48.8 & \textbf{63.9} & \textbf{56.3}& 25.1 & 49.6 & 37.4 & \textbf{54.3} & 57.3& 47.6 \\
%pos-AP & \textbf{66.4} & 60.0 & \textbf{43.9} & \textbf{31.8} & 24.9 & \textbf{66.3} & \textbf{55.8} & \textbf{66.3} & 23.3 & 39.5 & 33.9 & \textbf{57.9} & 53.1 & & & & & & & & \\
pos-AP (0 label noise) & 65.1 & 59.8 & \textbf{43.7} & \textbf{31.4} & \textbf{27.7} & \textbf{64.6} & 53.1 & 63.7 & \textbf{25.6} & \textbf{40.2} & \textbf{36.2} & \textbf{58.1} & \textbf{52.8} & 63.6 & 56.2 & \textbf{28.1} & \textbf{50.0} & \textbf{38.9} & 50.0 & \textbf{61.3} & \textbf{48.5} \\
\hline\hline
\bigstrut hinge-AP (20\% label noise) & 0.0 & 0.0 & 0.0 & 0.0 & 0.0 & 0.0 & 0.0 & 0.0 & 0.0 & 0.0 & 0.0 & 0.0 & 0.0 & 0.0 & 0.0& 0.0 & 0.0 & 0.0 & 0.0 & 0.0& 0.0 \\
pos-AP (20\% label noise) & \textbf{52.8} & \textbf{54.0} & \textbf{33.6} & \textbf{20.9} &\textbf{20.0} & \textbf{50.6} & \textbf{45.9} & \textbf{55.7}& \textbf{23.1} & \textbf{26.4} & \textbf{35.2} & \textbf{47.2} & \textbf{39.7} & \textbf{54.2} & \textbf{53.3} & \textbf{22.5} & \textbf{42.6} & \textbf{32.5} & \textbf{40.5} & \textbf{55.0} & \textbf{40.3}
\end{tabular}
\end{adjustbox}
\caption{Comparison of our direct AP loss minimization approach to surrogate loss function optimization on the object detection task.}
\label{tab:td}
\end{table*}

\paragraph{Dataset:}

For object detection we use the PASCAL VOC2012 object detection dataset
collected by~\citet{everingham2014pascal}. The dataset  contains 5717 images
for training, 5823 images for validation and 10991 images for test. For each image, we use the fast mode of selective search by
\citet{UijlingsIJCV2013} to produce around 2000 bounding boxes. We train
algorithms on the training set and report results on the validation set. 

\paragraph{Algorithms:}
On this dataset we follow the RCNN  paradigm ~\citep{girshick2014rich}.
We adjust the dimension of the top layer of the
network~\citep{krizhevsky2012imagenet} to be one and fine-tune  using weights
pre-trained on ILSVRC2012~\citep{ILSVRC15}. We train direct loss minimization
for all 20 classes separately. In contrast to the action classification task,
we cannot calculate the overall AP in each iteration, due to the large
number of  bounding boxes. Instead, we use the AP on each mini-batch to
approximate the overall AP. We find that using a batch size of 512 balances
computational complexity and performance, though using a larger batch size
(such as 2048)  generally results in better performance. For our final
results, we use a learning rate of 0.1, a regularization parameter of $1\cdot
10^{-7}$, and $\epsilon = 0.1$ for all classes. 

As  baselines, we evaluate a network which uses cross-entropy and is trained
separately for each class. Again, the network structure was
chosen to be identical and we use the parameters provided by~\citet{krizhevsky2012imagenet}
for initialization. In addition, we consider the structured SVM algorithm,
which optimizes a surrogate of the AP loss. This structured SVM was trained
using the same batch size as our direct loss minimization. We use a learning
rate of 1, and a regularization parameter of $1\cdot 10^{-7}$ for all
classes. As usual, we compare hinge-AP and pos-AP in the presence of 20\% label noise.
\paragraph{Results:}
 \tabref{tab:td} shows %that the  network trained independently for
% each class yields worse performance than the network trained jointly with
% softmax. Likely this is due to the fact that shared information benefits the
% softmax network. Importantly, we are also able to show 
competitive results of
 stochastic direct loss minimization, outperforming the strongest baseline by
 $0.9$. Our direct loss minimization performs better than
hinge loss in this case. This is because the data for training detectors
 are slightly noisier compared to the action classification task, which
is likely due to
 the common method of data augmentation based on intersection-over-union thresholds. To our astonishment, it becomes so hard for hinge-AP to learn well with noise in this detection task that it barely learns anything, while pos-AP only suffers from a reasonable decrease.
% {\color{blue}Why does this introduce noise?}\yangs{We augmented the ground truth by adding other regions that have an IoU $\ge$ 0.5. Sometimes those augmented regions are not very good.}

\section{Conclusion}

In this paper we have proposed a direct loss minimization approach to train
deep neural networks. We have demonstrated the effectiveness of our approach
in the context of maximizing average precision for ranking problems. This
involves minimizing a non-smooth and non-decomposable loss. Towards this goal
we have proposed a dynamic programming algorithm that can efficiently compute
the weight updates. Our experiments showed that  this is beneficial when
compared to a large variety of baselines in the context of action
classification and object detection, particularly in the presence of noisy
labels. % with noisy datasets. 
In the future, we plan to investigate direct loss minimization in the context
of other non-decomposable losses, such as intersection over union for semantic
segmentation and shortest-path predictions in graphs.

\section*{Acknowledgments}
YS would like to thank the Department of Physics, Tsinghua University for providing financial support for his stay in Toronto and travel to ICML. We thank David A. McAllester for discussions and all the reviewers for helpful suggestions. This work was partially supported  by ONR Grant N00014-14-1-0232, and a Google researcher award.

\bibliography{icml2016_conference}

\begin{thebibliography}{21}
\providecommand{\natexlab}[1]{#1}
\providecommand{\url}[1]{\texttt{#1}}
\expandafter\ifx\csname urlstyle\endcsname\relax
  \providecommand{\doi}[1]{doi: #1}\else
  \providecommand{\doi}{doi: \begingroup \urlstyle{rm}\Url}\fi

\bibitem[Bengio et~al.(2015)Bengio, Goodfellow, and
  Courville]{Bengio-et-al-2015-Book}
Bengio, Y., Goodfellow, I.~J., and Courville, A.
\newblock Deep learning.
\newblock Book in preparation for MIT Press, 2015.
\newblock URL \url{http://www.iro.umontreal.ca/~bengioy/dlbook}.

\bibitem[Burges(2010)]{BurgesTR2010}
Burges, C.~J.~C.
\newblock {From RankNet to LambdaRank to LambdaMART: An Overview}.
\newblock Technical report, Microsoft Research, 2010.

\bibitem[Burges et~al.(2005)Burges, Shaked, Renshaw, Lazier, Deeds, Hamilton,
  and Hullender]{burges2005learning}
Burges, C.~J.~C., Shaked, T., Renshaw, E., Lazier, A., Deeds, M., Hamilton, N.,
  and Hullender, G.
\newblock Learning to rank using gradient descent.
\newblock In \emph{Proc. ICML}, 2005.

\bibitem[Burges et~al.(2007)Burges, Ragno, and Le]{quoc2007learning}
Burges, C.~J.~C., Ragno, R., and Le, Q.~V.
\newblock Learning to rank with nonsmooth cost functions.
\newblock \emph{Proc. NIPS}, 2007.

\bibitem[Chen et~al.(2015)Chen, Schwing, Yuille, and Urtasun]{ChenICML2015}
Chen, L.-C., Schwing, A.~G., Yuille, A.~L., and Urtasun, R.
\newblock {Learning Deep Structured Models}.
\newblock In \emph{Proc. ICML}, 2015.

\bibitem[Cheng et~al.(2009)Cheng, Sha, and Saul]{cheng2009matrix}
Cheng, C.-C., Sha, F., and Saul, L.~K.
\newblock {Matrix updates for perceptron training of continuous density hidden
  markov models}.
\newblock In \emph{Proc. ICML}, 2009.

\bibitem[Doerr et~al.(2015)Doerr, Ratliff, Bohg, Toussaint, and
  Schaal]{doerr2015direct}
Doerr, A., Ratliff, N., Bohg, J., Toussaint, M., and Schaal, S.
\newblock {Direct loss minimization inverse optimal control}.
\newblock \emph{Proc. of robotics: science and systems (R: SS)}, 2015.

\bibitem[Everingham et~al.(2014)Everingham, Eslami, van Gool, Williams, Winn,
  and Zisserman]{everingham2014pascal}
Everingham, M., Eslami, A.~S.~M., van Gool, L., Williams, C.~K.~I., Winn, J.,
  and Zisserman, A.
\newblock {The pascal visual object classes challenge: A retrospective}.
\newblock \emph{IJCV}, 2014.

\bibitem[Girshick et~al.(2014)Girshick, Donahue, Darrell, and
  Malik]{girshick2014rich}
Girshick, R., Donahue, J., Darrell, T., and Malik, J.
\newblock {Rich feature hierarchies for accurate object detection and semantic
  segmentation}.
\newblock In \emph{Proc. CVPR}, 2014.

\bibitem[Keshet \& McAllester(2011)Keshet and
  McAllester]{keshet2011generalization}
Keshet, J. and McAllester, D.~A.
\newblock {Generalization bounds and consistency for latent structural probit
  and ramp loss}.
\newblock In \emph{Proc. NIPS}, 2011.

\bibitem[Keshet et~al.(2011)Keshet, Cheng, Stoehr, and
  McAllester]{KeshetInterspeech2011}
Keshet, J., Cheng, C.-C., Stoehr, M., and McAllester, D.
\newblock {Direct Error Rate Minimization of Hidden Markov Models}.
\newblock In \emph{Proc. Interspeech}, 2011.

\bibitem[Krizhevsky et~al.(2012)Krizhevsky, Sutskever, and
  Hinton]{krizhevsky2012imagenet}
Krizhevsky, A., Sutskever, I., and Hinton, G.~E.
\newblock {Imagenet classification with deep convolutional neural networks}.
\newblock In \emph{Proc. NIPS}, 2012.

\bibitem[LeCun \& Huang(2005)LeCun and Huang]{Lecun05lossfunctions}
LeCun, Y. and Huang, F.~J.
\newblock {Loss Functions for Discriminative Training of Energy-Based Models}.
\newblock In \emph{Proc. AISTATS}, 2005.

\bibitem[McAllester et~al.(2010)McAllester, Keshet, and Hazan]{hazan2010direct}
McAllester, D.~A., Keshet, J., and Hazan, T.
\newblock {Direct loss minimization for structured prediction}.
\newblock In \emph{Proc. NIPS}, 2010.

\bibitem[Mohapatra et~al.(2014)Mohapatra, Jawahar, and
  Kumar]{mohapatra2014efficient}
Mohapatra, P., Jawahar, C.~V., and Kumar, M.~P.
\newblock {Efficient Optimization for Average Precision SVM}.
\newblock In \emph{Proc. NIPS}, 2014.

\bibitem[Russakovsky et~al.(2015)Russakovsky, Deng, Su, Krause, Satheesh, Ma,
  Huang, Karpathy, Khosla, Bernstein, Berg, and Fei-Fei]{ILSVRC15}
Russakovsky, O., Deng, J., Su, H., Krause, J., Satheesh, S., Ma, S., Huang, Z.,
  Karpathy, A., Khosla, A., Bernstein, M., Berg, A.~C., and Fei-Fei, L.
\newblock {ImageNet Large Scale Visual Recognition Challenge}.
\newblock \emph{IJCV}, 2015.

\bibitem[Tarlow \& Zemel(2012)Tarlow and Zemel]{TarlowAISTATS2012}
Tarlow, D. and Zemel, R.~S.
\newblock {Structured Output Learning with High Order Loss Functions}.
\newblock In \emph{Proc. AISTATS}, 2012.

\bibitem[Tsochantaridis et~al.(2005)Tsochantaridis, Joachims, Hofmann, and
  Altun]{TsochantaridisJMLR2005}
Tsochantaridis, I., Joachims, T., Hofmann, T., and Altun, Y.
\newblock {Large Margin Methods for Structured and Interdependent Output
  Variables}.
\newblock \emph{JMLR}, 2005.

\bibitem[Uijlings et~al.(2013)Uijlings, van~de Sande, Gevers, and
  Smeulders]{UijlingsIJCV2013}
Uijlings, J., van~de Sande, K., Gevers, T., and Smeulders, A.
\newblock {Selective search for object recognition}.
\newblock \emph{IJCV}, 2013.

\bibitem[Volkovs \& Zemel(2009)Volkovs and Zemel]{VolkovsICML2009}
Volkovs, M.~N. and Zemel, R.~S.
\newblock {BoltzRank: Learning to Maximize Expected Ranking Gain}.
\newblock In \emph{Proc. ICML}, 2009.

\bibitem[Yue et~al.(2007)Yue, Finley, Radlinski, and Joachims]{yue2007support}
Yue, Y., Finley, T., Radlinski, F., and Joachims, T.
\newblock {A support vector method for optimizing average precision}.
\newblock In \emph{Proc. SIGIR}, 2007.

\end{thebibliography}
\bibliographystyle{icml2016}

\end{document}